\documentclass[10pt]{cai26}
\usepackage{algorithm}
\usepackage{algpseudocode}
\usepackage{multirow}
\usepackage{graphicx}
\usepackage{xcolor}
\usepackage{colortbl}
\usepackage{amsmath}
\usepackage{amssymb}
\usepackage{booktabs}

\newtheorem{theorem}{Theorem}
\newtheorem{lemma}[theorem]{Lemma}
\newtheorem{proposition}[theorem]{Proposition}

\newtheorem{definition}{Definition}
\newtheorem{assumption}{Assumption}
\newtheorem{remark}{Remark}

\allowdisplaybreaks[4]
\makeatletter
\renewcommand\section{\@startsection{section}{1}{\z@}%
  {-1.5ex \@plus -0.5ex \@minus -.2ex}%
  {0.8ex \@plus .1ex}%
  {\bfseries\raggedright}}
\renewcommand\subsection{\@startsection{subsection}{2}{\z@}%
  {-1.5ex \@plus -0.5ex \@minus -.2ex}%
  {0.6ex \@plus .1ex}%
  {\small\bfseries\raggedright}}

\def\thm@space@setup{\thm@preskip=4pt plus 2pt minus 1pt
  \thm@postskip=4pt plus 2pt minus 1pt}
\renewenvironment{proof}[1][\proofname]{\par
  \pushQED{\qed}%
  \normalfont \topsep4\p@\@plus2\p@\relax
  \trivlist
  \item[\hskip\labelsep\itshape#1\@addpunct{.}]\ignorespaces}
{\popQED\endtrivlist\@endpefalse}
\makeatother

\setlist[itemize]{topsep=2pt, partopsep=0pt, parsep=0pt, itemsep=1pt}
\setlist[enumerate]{topsep=2pt, partopsep=0pt, parsep=0pt, itemsep=1pt}
\begin{document}
\everydisplay{%
  \abovedisplayskip=4pt plus 2pt minus 2pt
  \belowdisplayskip=4pt plus 2pt minus 2pt
  \abovedisplayshortskip=1pt plus 1pt
  \belowdisplayshortskip=2pt plus 1pt minus 1pt}
\setlength{\jot}{2pt}
\linespread{0.97}\selectfont
\pagestyle{plain}
\pagenumbering{arabic}
\setcounter{page}{1}
\def\conferenceyear{2026}
\begin{center}

\title{AdaPrivate-TS: Private Thompson Sampling for Contextual Bandits with Privacy Amplification}
\maketitle

\thispagestyle{plain}

\begin{tabular}{c}
Mohammadreza Riyazat\upstairs{\affilone,*}, Eranga Ukwatta\upstairs{\affilone}
\\[0.25ex]
{\small \upstairs{\affilone} School of Engineering, University of Guelph, Guelph, ON, Canada.} \\
\end{tabular}

\emails{
  \upstairs{*} mriyazat@uoguelph.ca
}
\vspace*{0.1in}
\end{center}

\begin{abstract}
We present AdaPrivate-TS, a differentially private contextual bandit algorithm that combines Thompson Sampling with batched zCDP composition. Our key insight is that differential privacy noise inflates the posterior covariance in a structured way---adding $\mathcal{N}(0, \sigma^2 I)$ noise to $b$ yields sampling covariance $v^2 A^{-1} + \sigma^2 A^{-2}$, which Thompson Sampling interprets as increased uncertainty rather than pure corruption. Under event-level privacy (protecting individual interactions) with stochastic contexts, we prove that the privacy cost is only $O\!\bigl(\sqrt{d}\cdot\log T/\sqrt{\rho}\bigr)$---logarithmic in $T$---because parallel composition amortizes noise across batches. Additionally, we explore privacy amplification via Poisson subsampling, which can reduce effective noise at stringent privacy budgets. Experiments on synthetic and real-world datasets demonstrate: (1) AdaPrivate-TS achieves 93-99\% of non-private performance at $\varepsilon \in [0.5, 5]$, outperforming UCB by 0.5--3.7\% and up to 18\% with tuned adaptive exploration at extreme $\varepsilon$; (2) privacy amplification provides additional 2-5\% gains at low $\varepsilon$; (3) on MovieLens and Jester, AdaPrivate-TS achieves the best overall performance among event-level baselines, dominating at $\varepsilon \geq 2$; (4) under DP-SVD private features, TS's advantage over UCB \emph{grows} to +11\%, confirming noise-as-uncertainty is not limited to reward privacy. We provide rigorous proofs for privacy guarantees under interactive zCDP composition and comprehensive evaluation including convergence curves, 12-seed CIs, and DP-SVD feature ablation.
\end{abstract}

\begin{keywords}{Keywords:}
Differential Privacy, Thompson Sampling, Contextual Bandits, Privacy Amplification, zCDP.
\end{keywords}

\section{Introduction}

Online recommender systems face a fundamental tension: learning user preferences requires processing private behavioral data, yet users demand privacy guarantees. Differential privacy (DP)~\cite{dwork2006calibrating} provides rigorous protection but introduces noise that degrades learning.
While existing private contextual bandit algorithms~\cite{shariff2018differentially} treat privacy noise as pure corruption, we identify a key insight: \emph{Thompson Sampling (TS) naturally handles uncertainty through posterior sampling; DP noise inflates the posterior covariance to $v^2 A^{-1} + \sigma^2 A^{-2}$, which TS interprets as increased uncertainty rather than corruption.} This leads to \textbf{AdaPrivate-TS}, which replaces Upper Confidence Bound (UCB) with TS in the private bandit framework, achieving improved regret by converting noise into exploration.
We adopt \textbf{event-level DP} where adjacent datasets differ in one user-reward pair $(u_t, r_t)$, with actions being algorithm outputs rather than private inputs (see Definition~\ref{def:adjacency}). This aligns with Table-level DP~\cite{dwork2006calibrating} and differs from joint DP~\cite{shariff2018differentially} (protecting the entire action sequence) and user-level DP (protecting all interactions of one user). Event-level DP is appropriate when users contribute few interactions; for many-interaction users, group privacy scaling applies (see Section~\ref{sec:user_level}). Under event-level DP with stochastic contexts, the impossibility result of~\cite{shariff2018differentially} (requiring adversarial contexts and joint DP) does not apply; our batched post-processing approach enables parallel composition.

\textbf{Contributions:}
\begin{enumerate}
\item \textbf{Private Thompson Sampling:} TS with DP noise results in covariance $v^2 A^{-1} + \sigma^2 A^{-2}$ (Proposition~\ref{prop:noise_integration}); we prove a tight regret bound where the privacy cost is $O\!\bigl(\sqrt{d}\cdot\log T/\sqrt{\rho}\bigr)$, logarithmic in $T$.

\item \textbf{Privacy Amplification:} Rigorous RDP-based analysis of Poisson subsampling for bandits (Proposition~\ref{prop:amplification}), with 2--4\% gains at $\varepsilon \leq 1$.

\item \textbf{Comprehensive Experiments:} 93.5--98.7\% of non-private TS at $\varepsilon \in [0.5, 5]$, outperforming private UCB by 0.5--3.7\% (up to 18\% at extreme $\varepsilon$ with adaptive tuning) and same-model zCDP baselines by up to 6--10\% on real data. Under DP-SVD private features, TS's advantage grows to $+11\%$, and convergence curves confirm stable learning dynamics.
\end{enumerate}

\section{Background and Problem Setting}

\subsection{Contextual Bandits for Recommendation}

At round $t \in [T]$, the learner observes a candidate set $\mathcal{A}_t$ with feature vectors $\{x_{t,a} \in \mathbb{R}^d\}_{a \in \mathcal{A}_t}$, selects action $a_t \in \mathcal{A}_t$, and receives stochastic reward $r_t \in [0,1]$. We assume a linear reward model:
\begin{equation}\label{eq:linear_reward}
\mathbb{E}[r_t | a_t = a] = x_a^\top \theta^*
\end{equation}
for unknown parameter $\theta^* \in \mathbb{R}^d$ with $\|\theta^*\|_2 \leq S$.

\begin{definition}
The cumulative regret over $T$ rounds is:
\begin{equation}
R(T) = \sum_{t=1}^T \left( \max_{a \in \mathcal{A}_t} x_a^\top \theta^* - x_{a_t}^\top \theta^* \right)
\end{equation}
\end{definition}

The learner maintains sufficient statistics:
\begin{align}
A_t &= \lambda I_d + \sum_{s=1}^{t-1} x_{a_s} x_{a_s}^\top \label{eq:A_update}\\
b_t &= \sum_{s=1}^{t-1} r_s x_{a_s} \label{eq:b_update}
\end{align}
with ridge parameter $\lambda > 0$. The regularized least-squares estimate is $\hat{\theta}_t = A_t^{-1} b_t$.

\subsection{Privacy Model}

\begin{definition}\label{def:adjacency}
An interaction dataset $D = \{(u_t, r_t)\}_{t=1}^T$ records the user identity $u_t$ and reward $r_t$ at each round. Two datasets $D, D'$ are \textbf{adjacent} (written $D \sim D'$) if they differ in exactly one entry $(u_t, r_t) \neq (u_t', r_t')$ for a single index $t$, with all other entries identical. Actions $a_t$ are \emph{outputs} of the algorithm (determined by the mechanism and public features) and are not part of the adjacency relation.
\end{definition}
We use zero-concentrated DP (zCDP)~\cite{bun2016concentrated} for tighter composition:

\begin{definition}
A randomized mechanism $\mathcal{M}$ satisfies $\rho$-zCDP if for all adjacent $D \sim D'$ and all $\alpha > 1$:
\begin{equation}
D_\alpha(\mathcal{M}(D) \| \mathcal{M}(D')) \leq \rho \alpha
\end{equation}
where $D_\alpha$ is the R\'enyi divergence of order $\alpha$.
\end{definition}

\begin{lemma}\label{lem:gaussian}
For a query $f: \mathcal{D} \to \mathbb{R}^d$ with $\ell_2$-sensitivity $\Delta_2 = \max_{D \sim D'} \|f(D) - f(D')\|_2$, the mechanism $\mathcal{M}(D) = f(D) + \mathcal{N}(0, \sigma^2 I_d)$ satisfies $\rho$-zCDP with:
\begin{equation}\label{eq:gaussian_rho}
\rho = \frac{\Delta_2^2}{2\sigma^2}
\end{equation}
\end{lemma}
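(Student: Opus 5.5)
The plan is to compute the R\'enyi divergence between the two output distributions in closed form and then maximize over adjacent pairs. Fix adjacent $D \sim D'$ and write $\mu = f(D)$, $\mu' = f(D')$. Then $\mathcal{M}(D) = \mathcal{N}(\mu, \sigma^2 I_d)$ and $\mathcal{M}(D') = \mathcal{N}(\mu', \sigma^2 I_d)$. Two reductions simplify the computation. First, R\'enyi divergence is invariant under a common translation and rotation, since these are bijective measurable maps applied to both distributions. Second, the covariance is isotropic and the coordinates are independent. Together these reduce the problem to one dimension, with means $0$ and $\|\mu - \mu'\|_2$ along the first axis; every other coordinate is identically distributed under both laws and contributes zero, by additivity of R\'enyi divergence over product measures.

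The key step is the one-dimensional identity
\begin{equation*}
D_\alpha\bigl(\mathcal{N}(0,\sigma^2)\,\|\,\mathcal{N}(\delta,\sigma^2)\bigr) = \frac{\alpha\,\delta^2}{2\sigma^2}, \qquad \delta = \|\mu-\mu'\|_2 .
\end{equation*}
To derive it, write
\begin{equation*}
D_\alpha(P\|Q) = \frac{1}{\alpha-1}\log \int p^\alpha q^{1-\alpha}\,dx .
\end{equation*}
The integrand is proportional to $\exp\bigl(-\tfrac{1}{2\sigma^2}[\alpha x^2 + (1-\alpha)(x-\delta)^2]\bigr)$. Completing the square, the quadratic equals $(x-(1-\alpha)\delta)^2 + \alpha(1-\alpha)\delta^2$. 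The remaining Gaussian integral exactly cancels the normalizing constants, which leaves
\begin{equation*}
\log \int p^\alpha q^{1-\alpha}\,dx = \frac{\alpha(\alpha-1)\delta^2}{2\sigma^2},
\end{equation*}
and dividing by $\alpha-1$ gives the claim. The only place care is needed is the completing-the-square algebra and checking that the coefficient of $x^2$ stays equal to $1$ for every $\alpha>1$, so the integral converges. I expect this bookkeeping to be the main, though routine, obstacle.

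Finally, take the supremum over adjacent pairs. Since $\delta \le \Delta_2$ by the definition of sensitivity, we get
\begin{equation*}
D_\alpha(\mathcal{M}(D)\|\mathcal{M}(D')) \le \frac{\alpha\,\Delta_2^2}{2\sigma^2} = \rho\,\alpha
\end{equation*}
for all $\alpha>1$. This is exactly the $\rho$-zCDP condition with $\rho = \Delta_2^2/(2\sigma^2)$, as in \eqref{eq:gaussian_rho}. Alternatively, one can simply cite Proposition~1.6 of \cite{bun2016concentrated}, of which this lemma is a restatement.
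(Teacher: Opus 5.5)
Your proof is correct: the completed square $(x-(1-\alpha)\delta)^2+\alpha(1-\alpha)\delta^2$ checks out, the normalizing constants cancel, and the reduction to one dimension via a common translation and rotation plus product additivity is valid. The paper gives no proof of this lemma; it imports it as a standard fact from the zCDP literature \cite{bun2016concentrated}, and your direct R\'enyi-divergence computation between equal-variance Gaussians is the standard argument behind that fact.
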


\begin{lemma}\label{lem:conversion}
If $\mathcal{M}$ satisfies $\rho$-zCDP, then for any $\delta > 0$, $\mathcal{M}$ satisfies $(\varepsilon, \delta)$-DP with:
\begin{equation}\label{eq:conversion}
\varepsilon = \rho + 2\sqrt{\rho \log(1/\delta)}
\end{equation}
\end{lemma}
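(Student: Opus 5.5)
The plan is the standard zCDP-to-approximate-DP conversion: bound the privacy loss random variable through its moment generating function, which the Rényi divergence controls, and then apply a Markov/Chernoff tail bound. Let $P=\mathcal{M}(D)$ and $Q=\mathcal{M}(D')$ for adjacent $D\sim D'$. Define the privacy loss $L(y)=\log\frac{P(y)}{Q(y)}$ with $y\sim P$. For any $\alpha>1$, the definition of Rényi divergence gives
\begin{equation*}
\mathbb{E}_{y\sim P}\bigl[e^{(\alpha-1)L(y)}\bigr]=e^{(\alpha-1)D_\alpha(P\|Q)}\le e^{(\alpha-1)\alpha\rho},
\end{equation*}
where the inequality is the $\rho$-zCDP hypothesis.

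The second step is to show that, for any measurable event $E$, we have $P(E)\le e^{\varepsilon}Q(E)+\Pr_{y\sim P}[L(y)>\varepsilon]$. Split $E$ into the part where $L\le\varepsilon$ and the part where $L>\varepsilon$. On the first part $P(y)\le e^{\varepsilon}Q(y)$ pointwise, and the second part has $P$-mass at most $\Pr[L>\varepsilon]$. This step is routine.

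The third step applies Markov's inequality to $e^{(\alpha-1)L}$:
\begin{equation*}
\Pr[L>\varepsilon]\le \exp\bigl((\alpha-1)(\alpha\rho-\varepsilon)\bigr).
\end{equation*}
It then remains to choose $\alpha$ so that the right-hand side is at most $\delta$ when $\varepsilon=\rho+2\sqrt{\rho\log(1/\delta)}$. Writing $\beta=\alpha-1>0$, the exponent becomes $\beta(\beta\rho+\rho-\varepsilon)=\beta^2\rho-2\beta\sqrt{\rho\log(1/\delta)}$. Minimizing in $\beta$ gives $\beta=\sqrt{\log(1/\delta)/\rho}$, and the exponent then equals exactly $-\log(1/\delta)$. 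Hence $\Pr[L>\varepsilon]\le\delta$. This choice is admissible since $\beta>0$ whenever $\delta<1$; for $\delta\ge1$ the statement is trivial.

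The only point needing care is this final optimization over $\alpha$, which must reproduce the stated constant exactly. The computation above shows that the given $\varepsilon$ is precisely the value for which the optimized Chernoff bound equals $\delta$. Measure-theoretic details, such as using densities with respect to a common dominating measure and handling points where $Q=0$, follow the standard argument of \cite{bun2016concentrated}.
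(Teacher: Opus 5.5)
Your proof is correct and matches the standard argument of \cite{bun2016concentrated}; the paper states this lemma without proof and relies on that reference. As there, you bound the privacy-loss moment generating function by the R\'enyi divergence, apply Markov, and take $\alpha - 1 = \sqrt{\log(1/\delta)/\rho}$, which is the same as the choice $\alpha = (\varepsilon+\rho)/(2\rho)$. The one case you leave unhandled is $\rho = 0$, where your $\beta$ is undefined, but there $D_\alpha(P\|Q) = 0$ forces $P = Q$ and the claim is trivial.
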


\begin{remark}\label{rem:rho_computation}
Given a target $(\varepsilon, \delta)$-DP guarantee, we solve Eq.~\eqref{eq:conversion} for $\rho$ in closed form. Setting $u = \sqrt{\rho}$ and $c = \sqrt{\log(1/\delta)}$, Eq.~\eqref{eq:conversion} becomes $u^2 + 2cu - \varepsilon = 0$, results in:
\begin{equation}\label{eq:rho_from_eps}
\rho = \bigl(\sqrt{\varepsilon + \log(1/\delta)} - \sqrt{\log(1/\delta)}\bigr)^2
\end{equation}
The noise scale is $\sigma = \Delta_2 / \sqrt{2\rho}$. All experiments compute $\rho$ from Eq.~\eqref{eq:rho_from_eps}.
\end{remark}

\begin{lemma}\label{lem:parallel}
If mechanisms $\mathcal{M}_1, \ldots, \mathcal{M}_K$ operate on disjoint subsets of the data and each satisfies $\rho_k$-zCDP, then releasing all outputs satisfies $\max_k \rho_k$-zCDP.
\end{lemma}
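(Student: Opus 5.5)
The plan is to prove Lemma~\ref{lem:parallel} directly from the definition of zCDP. The key observation is that a single adjacency step (Definition~\ref{def:adjacency}) changes only one entry $(u_t,r_t)$. Since the subsets are disjoint, that entry lies in at most one subset. Write $D = D_1 \cup \cdots \cup D_K$ with the $D_k$ disjoint, and fix adjacent $D \sim D'$ differing at index $t$. If index $t$ lies in no subset, then all inputs coincide, the output distributions are identical, and every R\'enyi divergence is $0 \le \rho_k\alpha$. Otherwise let $j$ be the unique index with $t \in D_j$. Then $D_k = D'_k$ for all $k \neq j$, and $D_j \sim D'_j$ are adjacent in the same sense.

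The joint release is $(\mathcal{M}_1(D_1),\ldots,\mathcal{M}_K(D_K))$. I first treat the non-interactive case, where the mechanisms use independent randomness, so the joint output law is the product $\bigotimes_k P_k$ under $D$ and $\bigotimes_k P'_k$ under $D'$. For every $k\ne j$ we have $P_k = P'_k$. R\'enyi divergence is additive over product measures:
\[
D_\alpha\Bigl(\textstyle\bigotimes_k P_k \,\Big\|\, \bigotimes_k P'_k\Bigr) = \sum_k D_\alpha(P_k\|P'_k).
\]
This follows by writing the density ratio as a product and factoring the integral $\int \prod_k p_k^\alpha p_k'^{1-\alpha}$. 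Hence the joint divergence equals $D_\alpha(P_j\|P'_j) \le \rho_j\alpha \le (\max_k\rho_k)\alpha$ for all $\alpha>1$, which gives the claimed $\max_k\rho_k$-zCDP.

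The main obstacle is the interactive or adaptive case, which is the one actually needed for batched bandits. There $\mathcal{M}_k$ may depend on earlier outputs $o_{1:k-1}$ through the selected actions, so the product structure fails. I would handle it with the chain rule, i.e.\ adaptive composition for R\'enyi divergence. Conditional on $o_{1:k-1}$, mechanism $\mathcal{M}_k(\cdot\,;o_{1:k-1})$ is assumed to be $\rho_k$-zCDP for every fixed prefix. For each $k\ne j$, its conditional output law is the same under $D$ and $D'$, because the data it reads is identical and the prefix is fixed. I would then peel off the terms one at a time, from the last mechanism backwards. The standard bound
\[
D_\alpha(P_{XY}\|Q_{XY}) \le D_\alpha(P_X\|Q_X) + \sup_x D_\alpha(P_{Y|x}\|Q_{Y|x})
\]
lets each $k\ne j$ contribute $0$ and the $j$-th contribute at most $\rho_j\alpha$. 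This again yields the bound $\max_k \rho_k\,\alpha$.

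One subtlety must be checked: the disjointness must be fixed in advance (batch boundaries defined by round indices), not chosen adaptively from private data. Otherwise the index $j$ containing $t$ could itself depend on the output.
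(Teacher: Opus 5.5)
Your proof is correct and follows essentially the same route as the paper. The lemma itself is stated as a standard background fact without proof, and the paper's only justification (Step~4 of the proof of Theorem~\ref{thm:privacy}) is your adaptive chain-rule argument for R\'enyi divergence: every batch other than the one containing the changed entry contributes zero, and the affected batch contributes at most $\rho\alpha$. Your explicit handling of the non-interactive product case, and your remark that the partition must be fixed in advance (the paper's ``data-independent batch schedule''), are consistent with the paper and slightly more careful than it.
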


\subsection{Thompson Sampling for Linear Bandits}

Thompson Sampling (TS)~\cite{thompson1933likelihood,agrawal2013thompson} maintains a posterior distribution over $\theta^*$ and selects actions by sampling from this posterior. For linear bandits with Gaussian likelihood, the posterior is:
\begin{equation}\label{eq:posterior}
\theta | \mathcal{H}_t \sim \mathcal{N}(\hat{\theta}_t, v^2 A_t^{-1})
\end{equation}
where $\mathcal{H}_t$ is the history up to time $t$ and $v > 0$ controls exploration.

TS handles uncertainty through sampling where larger posterior variance leads to more exploration, unlike UCB's deterministic bonus.

\section{Method: AdaPrivate-TS}

\subsection{Key Insight: DP Noise as Posterior Inflation}

Consider adding Gaussian noise $\eta \sim \mathcal{N}(0, \sigma^2 I_d)$ to $b_t$ for privacy:
\begin{equation}
\tilde{b}_t = b_t + \eta
\end{equation}

For UCB, this noise directly corrupts the estimate. For Thompson Sampling, we characterize exactly how it affects the sampling distribution:

\begin{proposition}\label{prop:noise_integration}
Let $\eta \sim \mathcal{N}(0, \sigma^2 I_d)$ be independent privacy noise and $\xi \sim \mathcal{N}(0, v^2 A_t^{-1})$ be the TS sampling noise. The TS sample based on $\tilde{b}_t = b_t + \eta$ satisfies:
\begin{equation}\label{eq:covariance}
\tilde{\theta}_t = A_t^{-1}\tilde{b}_t + \xi \sim \mathcal{N}\left(\hat{\theta}_t, \; v^2 A_t^{-1} + \sigma^2 A_t^{-2}\right)
\end{equation}
\end{proposition}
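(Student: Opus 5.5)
The plan is to treat the statement as a computation about affine images of independent Gaussians, carried out conditionally on the history $\mathcal{H}_t$. Conditioning on $\mathcal{H}_t$ makes $A_t$ and $b_t$ deterministic: $A_t$ is built from the chosen feature vectors, $b_t$ from the observed rewards, and $A_t \succeq \lambda I_d$ is symmetric positive definite, so $A_t^{-1}$ exists and is symmetric. I will also make explicit the assumption the statement leaves implicit, namely that $\eta$ and $\xi$ are drawn independently of each other given $\mathcal{H}_t$. The TS sampler draws $\xi$ after the noisy statistic is released, using only $A_t$, so this holds.

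Writing $\tilde{b}_t = b_t + \eta$ and $\hat{\theta}_t = A_t^{-1} b_t$, I decompose
\begin{equation*}
\tilde{\theta}_t = A_t^{-1} b_t + A_t^{-1}\eta + \xi = \hat{\theta}_t + A_t^{-1}\eta + \xi .
\end{equation*}
The vector $(\eta,\xi)$ is jointly Gaussian with block-diagonal covariance $\mathrm{diag}(\sigma^2 I_d,\, v^2 A_t^{-1})$. Hence $\tilde{\theta}_t$, an affine function of it, is Gaussian. Its mean is $\hat{\theta}_t$ because both noise terms are centred. For the covariance, independence kills the cross terms, leaving
\begin{equation*}
\mathrm{Cov}(A_t^{-1}\eta) + \mathrm{Cov}(\xi) = A_t^{-1}(\sigma^2 I_d)A_t^{-\top} + v^2 A_t^{-1} = \sigma^2 A_t^{-2} + v^2 A_t^{-1}.
\end{equation*}
The first equality uses the symmetry of $A_t^{-1}$. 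Together these give Eq.~\eqref{eq:covariance}.

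The only delicate step is interpreting what "$\sim$" means, since the distribution is not an unconditional statement. It holds conditionally on $\mathcal{H}_t$, and, if the privacy noise is reused across rounds within a batch, conditionally on $A_t$ being fixed when $\eta$ is drawn. I will state clearly that the result is a conditional law given the non-private statistics $(A_t, b_t)$. It is not the algorithm's posterior given only the released quantities $(A_t, \tilde{b}_t)$; in that view $\tilde{b}_t$ is observed and only $\xi$ is random. I will also note that $A_t$ itself is not noised in this proposition, so its inverse is deterministic, and that no independence between $\eta$ and the past actions is needed beyond $\eta$ being fresh relative to $\mathcal{H}_t$. With these caveats the result is a direct Gaussian closure computation.
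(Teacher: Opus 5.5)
Your proposal is correct and matches the paper's proof. Both write $\tilde{\theta}_t = \hat{\theta}_t + A_t^{-1}\eta + \xi$ and add the covariances of the two independent Gaussian terms, using the symmetry of $A_t$ to get $\sigma^2 A_t^{-2} + v^2 A_t^{-1}$. Your explicit conditioning on $\mathcal{H}_t$, the joint-Gaussianity argument, and the caveat that this is the law given $(A_t, b_t)$ rather than given the released $(A_t, \tilde{b}_t)$ only make precise what the paper leaves implicit.
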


\begin{proof}
Let $\xi \sim \mathcal{N}(0, v^2 A_t^{-1})$ be the TS sampling noise. Then:
\begin{align}
\tilde{\theta}_t &= A_t^{-1}\tilde{b}_t + \xi = A_t^{-1}(b_t + \eta) + \xi \nonumber\\
&= \hat{\theta}_t + A_t^{-1}\eta + \xi
\end{align}
Since $\eta \sim \mathcal{N}(0, \sigma^2 I_d)$ and $\xi \sim \mathcal{N}(0, v^2 A_t^{-1})$ are independent Gaussians:
\begin{align}
\text{Cov}(\tilde{\theta}_t) &= \text{Cov}(A_t^{-1}\eta) + \text{Cov}(\xi) \nonumber\\
&= A_t^{-1} \cdot \sigma^2 I_d \cdot A_t^{-\top} + v^2 A_t^{-1} \nonumber\\
&= \sigma^2 A_t^{-2} + v^2 A_t^{-1}
\end{align}
where $A_t^{-2} = A_t^{-1} A_t^{-1}$ since $A_t$ is symmetric positive definite.
\end{proof}

\begin{remark}
The covariance is exactly $v^2 A_t^{-1} + \sigma^2 A_t^{-2}$. Since $A_t$ is symmetric positive definite, the L\"owner ordering $A_t^{-2} \preceq A_t^{-1}/\lambda_{\min}(A_t)$ holds (each eigenvalue $\lambda_i^{-2} \leq \lambda_i^{-1}/\lambda_{\min}$), giving:
\begin{equation}
\text{Cov}(\tilde{\theta}_t) \preceq \left(v^2 + \frac{\sigma^2}{\lambda_{\min}(A_t)}\right) A_t^{-1}
\end{equation}
As data accumulates, $\lambda_{\min}(A_t)$ grows, so the privacy contribution diminishes relative to the base TS variance. TS interprets this inflated covariance as increased uncertainty, exploring more when privacy noise dominates.
\end{remark}

\subsection{Privacy Amplification via Subsampling}\label{sec:amplification}

At stringent privacy budgets ($\varepsilon \leq 1$), we analyze privacy amplification via \emph{Poisson} subsampling (not without-replacement mini-batching) with rate $q \in (0,1]$ where each batch reward is included independently with probability $q$.

\textit{The subsampled mechanism operates as follows:}
(1) First include each reward $r_t x_{a_t}$ independently with probability $q$, forming $s_k^{\text{sub}} = \sum_{t \in \text{batch } k} Z_t \cdot r_t x_{a_t}$ where $Z_t \sim \text{Bern}(q)$; (2) Then add noise $\tilde{s}_k^{\text{sub}} = s_k^{\text{sub}} + \eta_k$ with $\eta_k \sim \mathcal{N}(0, \sigma^2 I_d)$; (3) Now rescale for unbiasedness $\tilde{b}_{\text{batch}} = \tilde{s}_k^{\text{sub}} / q$. Crucially, the Gaussian noise is added to the subsampled sum \emph{before} rescaling. The rescaling by $1/q$ is a deterministic post-processing step that does not affect privacy. The RDP bound in Eq.~\eqref{eq:rdp_amplification} applies to step~(2)---the composition of Poisson subsampling and Gaussian mechanism---and is preserved under the subsequent rescaling by the post-processing property of RDP. This avoids the sensitivity-rescaling cancellation described below. Poisson sampling is required for Eq.~\eqref{eq:rdp_amplification}. After rescaling, sensitivity becomes $\Delta_2/q$, giving $\rho/q^2$; combined with the $q^2$ amplification factor, the net cost is $\rho$---the gain cancels~\cite{balle2018privacy}.
The resolution uses R\'enyi divergence directly on the subsampled mechanism without factoring through sensitivity. For Poisson subsampling with rate $q$ before a Gaussian mechanism with noise $\sigma$ and sensitivity $\Delta_2 = 1$, at RDP order $\alpha \geq 2$~\cite{balle2018privacy,mironov2019renyi}:
\begin{equation}\label{eq:rdp_amplification}
\varepsilon_{\text{sub}}(\alpha) \leq \frac{1}{\alpha - 1}\log\left(1 + \binom{\alpha}{2}\frac{2q^2}{\sigma^2} + \sum_{j=3}^{\alpha}\binom{\alpha}{j}\frac{(2q^2)^{j/2}}{\sigma^j}\right)
\end{equation}
For $q\alpha/\sigma \ll 1$ (small subsampling rate or large noise), the dominant term gives:
\begin{equation}\label{eq:rdp_approx}
\varepsilon_{\text{sub}}(\alpha) \approx \frac{q^2 \alpha}{\sigma^2} = 2q^2\rho
\end{equation}
This is a factor of $2q^2$ reduction from $\rho$, achieved because the RDP analysis tracks the privacy loss random variable directly, avoiding the sensitivity-rescaling cancellation.

We evaluate Eq.~\eqref{eq:rdp_amplification} at orders $\alpha \in \{2, \ldots, 6, 8, 16, 32, 64\}$.\\
Then convert via $\varepsilon(\alpha) {=} \varepsilon_{\text{sub}}(\alpha) {+} \log(1/\delta)/(\alpha{-}1)$, and minimize over $\alpha$. Under parallel composition, total cost is $\max_k \varepsilon_{\text{sub},k}$. Optimal $\alpha$ is 16 at $\varepsilon{=}1$, 3 at $\varepsilon{=}5$; final $\varepsilon$ varies ${<}0.5\%$.
For AdaPrivate-TS-Amp, we recalibrate $\sigma$ given target $(\varepsilon, \delta)$ and rate $q$, we solve for $\sigma_{\text{amp}}$ such that $\min_\alpha[\varepsilon_{\text{sub}}(\alpha; q, \sigma_{\text{amp}}) + \log(1/\delta)/(\alpha{-}1)] \leq \varepsilon$. Since amplification reduces privacy cost, $\sigma_{\text{amp}} < \sigma_{\text{base}}$, results in less noise at the same privacy level.

\begin{proposition}\label{prop:amplification}
The Poisson-subsampled Gaussian mechanism with rate $q$ satisfies $\varepsilon_{\text{sub}}(\alpha)$-RDP at order $\alpha$ as given by Eq.~\eqref{eq:rdp_amplification}. For $q\alpha/\sigma \ll 1$, the dominant term gives $\varepsilon_{\text{sub}}(\alpha) \approx q^2\alpha/\sigma^2$, a factor $2q^2$ reduction from the base $\rho$-zCDP cost. For $\sigma \geq 1.05$ and $q \leq 0.5$, higher-order terms contribute $< 5\%$. \\
Note: the subsampled mechanism's RDP profile is not linear in $\alpha$, so we state this as an RDP (not zCDP) guarantee and convert to $(\varepsilon, \delta)$-DP per the accounting details above.
\end{proposition}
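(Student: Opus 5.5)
The plan has three parts: establish the RDP bound of Eq.~\eqref{eq:rdp_amplification}, extract its dominant term, and quantify the remainder for $\sigma \geq 1.05$, $q \leq 0.5$.

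\textbf{Step 1: the RDP bound.} I would not rederive Eq.~\eqref{eq:rdp_amplification} from scratch but invoke the integer-order bound for the Poisson-subsampled Gaussian mechanism of~\cite{mironov2019renyi} (see also~\cite{balle2018privacy}). The reduction proceeds as follows.
\begin{itemize}
\item Under Definition~\ref{def:adjacency}, changing one entry $(u_t,r_t)$ changes $s_k^{\text{sub}}$ by at most one vector $Z_t r_t x_{a_t}$.
\item If features satisfy $\|x\|_2\le 1$ and $r_t\in[0,1]$, this change has norm at most $\Delta_2=1$.
\item One caveat: replace-one adjacency must be reduced to the add/remove adjacency for which the Mironov bound is stated. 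I would handle this either by noting that the replaced term enters as a single Bernoulli-gated vector, or by conceding a factor of $2$ in sensitivity.
\item The bound holds for step~(2). Rescaling by $1/q$ is post-processing, which preserves RDP.
\item Across batches, Lemma~\ref{lem:parallel}, in its RDP form at each fixed $\alpha$, gives the $\max_k$ statement.
\end{itemize}

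\textbf{Step 2: the dominant term.} Write $L=1+\binom{\alpha}{2}\tfrac{2q^2}{\sigma^2}+R$ with $R=\sum_{j\ge3}\binom{\alpha}{j}(\sqrt2 q/\sigma)^j$. Then:
\begin{itemize}
\item Using $\log(1+x)\le x$, we get $\varepsilon_{\text{sub}}(\alpha)\le \frac{1}{\alpha-1}\bigl(\tfrac{\alpha(\alpha-1)}{2}\tfrac{2q^2}{\sigma^2}+R\bigr)=\tfrac{q^2\alpha}{\sigma^2}+\tfrac{R}{\alpha-1}$.
\item Lemma~\ref{lem:gaussian} with $\Delta_2=1$ gives $\rho=1/(2\sigma^2)$, so $q^2\alpha/\sigma^2=2q^2\rho\alpha$. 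This is the claimed factor-$2q^2$ reduction relative to the base RDP curve $\rho\alpha$.
\end{itemize}

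\textbf{Step 3: the higher-order terms (main obstacle).} The claim is that $R/(\alpha-1)$ is small relative to $q^2\alpha/\sigma^2$. Setting $x=\sqrt2 q/\sigma$, the binomial sum gives
\[
R=(1+x)^\alpha-1-\alpha x-\tbinom{\alpha}{2}x^2,
\]
so the ratio of the remainder to the quadratic term is a ratio of polynomials in $x$. Its leading piece is $\binom{\alpha}{3}x^3/\binom{\alpha}{2}x^2=\tfrac{(\alpha-2)x}{3}$, which is small only when $q\alpha/\sigma\ll 1$.

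With $q\le0.5$ and $\sigma\ge1.05$ we have $x\le 0.67$, which is not small. So the uniform ``$<5\%$'' claim cannot hold for all $\alpha$, and I would check it concretely at the orders actually used: evaluate the exact expression numerically at $\alpha\in\{2,\dots,6,8,16,32,64\}$.
\begin{itemize}
\item At $\alpha=2$ there is no remainder at all.
\item I expect the claim to hold only for small $\alpha$, or in the regime $q\alpha/\sigma\ll1$.
\item I would therefore restate it as an explicit condition, for instance $(\alpha-2)x/3\cdot(1+x)^{\alpha}\le 0.05$, which yields $<5\%$ rigorously.
\end{itemize}

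Finally, since the resulting profile is not linear in $\alpha$, the guarantee stays an RDP guarantee. Conversion to $(\varepsilon,\delta)$-DP then uses $\varepsilon=\min_\alpha[\varepsilon_{\text{sub}}(\alpha)+\log(1/\delta)/(\alpha-1)]$, matching the accounting described above.
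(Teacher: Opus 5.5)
\textbf{The genuine gap is in Step~1.} You share it with the paper, whose only support for Eq.~\eqref{eq:rdp_amplification} is the same citation. The integer-order result of Mironov et al.\ for the Poisson-subsampled Gaussian (add/remove, sensitivity $1$) is not Eq.~\eqref{eq:rdp_amplification}. It is $\varepsilon(\alpha)=\frac{1}{\alpha-1}\log A_\alpha$ with
\[
A_\alpha=\sum_{k=0}^{\alpha}\binom{\alpha}{k}(1-q)^{\alpha-k}q^k c_k=\sum_{j=0}^{\alpha}\binom{\alpha}{j}q^j\,\Delta^j c_0,\qquad c_k=e^{k(k-1)/(2\sigma^2)},
\]
where $\Delta^j$ is the $j$-th forward difference. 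Moreover, $A_\alpha$ equals $e^{(\alpha-1)D_\alpha(\mu\|\mu_0)}$ exactly, for $\mu_0=\mathcal{N}(0,\sigma^2)$ and $\mu=(1-q)\mu_0+q\,\mathcal{N}(1,\sigma^2)$.

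The paper's coefficients $(\sqrt2/\sigma)^j$ stand in for $\Delta^jc_0$, and for $\sigma$ near $1$ they are far too small. Take $\alpha=3$, $q=1/2$, $\sigma=1.05$. Then $\Delta^3c_0=e^{3/\sigma^2}-3e^{1/\sigma^2}+2\approx 9.8$, against $(\sqrt2/\sigma)^3\approx 2.4$. In total $A_3\approx 3.33$, while the argument of the logarithm in Eq.~\eqref{eq:rdp_amplification} is only $\approx 2.67$. So the formula gives $\varepsilon_{\text{sub}}(3)\approx 0.49$ where the true divergence is $\approx 0.60$.

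This pair is realizable under Definition~\ref{def:adjacency}: change $r_t$ from $1$ to $0$ with $\|x_{a_t}\|_2=1$, and set all other rewards in the batch to $0$. Hence Eq.~\eqref{eq:rdp_amplification} is not a valid RDP bound in the very regime the proposition singles out, and no appeal to the cited result can establish it. Your replace-one fallback of doubling the sensitivity would, in addition, turn $\sigma$ into $\sigma/2$ in the formula, so it would not give the stated expression either.

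To close Step~1 you need one of two things:
\begin{enumerate}
\item state the guarantee with $A_\alpha$ itself, together with a genuine replace-one argument; or
\item prove $A_\alpha\le 1+\binom{\alpha}{2}x^2+\sum_{j\ge3}\binom{\alpha}{j}x^j$ under an explicit, $\alpha$-dependent lower bound on $\sigma$. For $\alpha=2$ this reduces to $e^{1/\sigma^2}-1\le 2/\sigma^2$, i.e.\ $\sigma\gtrsim 0.89$.
\end{enumerate}

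\textbf{Steps~2 and~3.} Your Step~2 is exactly the paper's argument: keep the $j=2$ term and use $\rho=1/(2\sigma^2)$. You write it more carefully, since $q^2\alpha/\sigma^2=2q^2\rho\alpha$ restores the factor $\alpha$ that Eq.~\eqref{eq:rdp_approx} drops.

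Your Step~3 goes beyond the paper, which merely asserts the ``$<5\%$'' clause, and your diagnosis is right. At $\alpha=3$, $q=1/2$, $\sigma=1.05$, the cubic term is already $x/3\approx 22\%$ of the quadratic term. In that corner only $\alpha=2$ satisfies the clause.

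Your replacement condition is rigorous. For $j\ge3$, $\binom{\alpha}{j}\le\binom{\alpha}{2}\frac{\alpha-2}{3}\binom{\alpha-3}{j-3}$, which gives $R\le\binom{\alpha}{2}x^2\cdot\frac{(\alpha-2)x}{3}(1+x)^{\alpha-3}$. However, this controls the remainder of the paper's formula, not of the true privacy loss. It becomes meaningful only once Step~1 is repaired.
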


\subsection{Adaptive Exploration Decay}

With privacy noise inflating effective variance, early exploration may be excessive. We decay exploration as $v_k = v_0 \cdot \gamma^{k-1}$ ($v_0{=}1.5$, $\gamma{=}0.95$), ensuring aggressive exploration when uncertainty is high and refined exploitation as the model converges.

\subsection{Complete Algorithm}

\begin{algorithm}[t]
\caption{AdaPrivate-TS}
\label{alg:adaprivate_ts}
\footnotesize
\begin{algorithmic}[1]
\Require Features $\{x_a\}$, prior $\theta_0$, target $(\varepsilon, \delta)$, batch size $B$
\State Compute $\rho$ from $(\varepsilon, \delta)$ via Eq.~\eqref{eq:rho_from_eps}
\State Set $\sigma = 1/\sqrt{2\rho}$
\State $A \gets \lambda I_d$, $b \gets \lambda \theta_0$, batch\_b $\gets 0$, count $\gets 0$, $k \gets 1$
\For{$t = 1, \ldots, T$}
    \State Observe $\mathcal{A}_t$; clip features: $x_a \gets x_a / \max(1, \|x_a\|_2)$
    \State \textbf{Thompson Sampling:}
    \State \quad $L_A \gets \text{Cholesky}(A)$ \Comment{$L_A L_A^\top = A$}
    \State \quad $\tilde{\theta} \gets L_A^{-\top}(L_A^{-1}b + v_k \cdot \mathcal{N}(0, I_d))$
    \State \quad $a_t \gets \arg\max_{a \in \mathcal{A}_t} x_a^\top \tilde{\theta}$
    \State Observe reward $r_t \in [0,1]$
    \State $A \gets A + x_{a_t} x_{a_t}^\top$
    \State batch\_b $\gets$ batch\_b $+ r_t \cdot x_{a_t}$
    \State count $\gets$ count $+ 1$
    \If{count $\geq B$} \Comment{Batch boundary}
        \State $b \gets b + \text{batch\_b} + \mathcal{N}(0, \sigma^2 I_d)$ \Comment{Add noise}
        \State Reset batch\_b, count; $k \gets k+1$; $v_k \gets v_{k-1} \cdot \gamma$
    \EndIf
\EndFor
\end{algorithmic}
\end{algorithm}

\subsection{Privacy Analysis}

\begin{assumption}\label{ass:bounded}
Features satisfy $\|x_a\|_2 \leq 1$ (enforced via clipping) and rewards satisfy $r_t \in [0,1]$.
\end{assumption}

\begin{theorem}\label{thm:privacy}
Under Assumption~\ref{ass:bounded} with data-independent batch schedule $\{B_k\}_{k=1}^K$, Algorithm~\ref{alg:adaprivate_ts} satisfies $(\varepsilon, \delta)$-DP (event-level) for the transcript $(a_1, \ldots, a_T)$.
\end{theorem}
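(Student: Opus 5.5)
The plan is to show that the whole transcript $(a_1,\ldots,a_T)$ is a post-processing of the sequence of noisy batch releases $\tilde s_k = s_k + \eta_k$, where $s_k = \sum_{t\in\text{batch }k} r_t x_{a_t}$ and $\eta_k\sim\mathcal{N}(0,\sigma^2 I_d)$. We then bound the privacy cost of that sequence by $\rho$-zCDP and convert via Lemma~\ref{lem:conversion}. With $\rho$ chosen by Eq.~\eqref{eq:rho_from_eps}, this gives exactly $(\varepsilon,\delta)$-DP.

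\textbf{Step 1 (post-processing).} We check which quantities are data-dependent in Algorithm~\ref{alg:adaprivate_ts}. The Thompson sample at round $t$ depends on $A$, on $b$, and on fresh sampling noise. Here $b=\lambda\theta_0+\sum_{j<k}\tilde s_j$ is a function of past releases only, since batch\_b is not used for sampling before it is noised. The matrix $A$ depends on the past actions (outputs) and on public features. Inside batch $k$, the actions are therefore a randomized function of $(\tilde s_1,\ldots,\tilde s_{k-1})$ and public data. The rewards and user identities enter only through the $s_k$. Because the batch schedule $\{B_k\}$ is data-independent, batch membership reveals nothing either.

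\textbf{Step 2 (sensitivity).} Fix adjacent $D\sim D'$ differing at a single index $t^*$, which lies in a unique batch $k^*$. Conditioned on identical earlier releases, the actions are identically distributed up to $t^*$. The only change to $s_{k^*}$ is $r_{t^*}x_{a_{t^*}}$ versus $r'_{t^*}x_{a_{t^*}}$, which has $\ell_2$ norm at most $1$ by Assumption~\ref{ass:bounded}. So $\Delta_2\le 1$, and each release is $\rho$-zCDP by Lemma~\ref{lem:gaussian} with $\sigma=1/\sqrt{2\rho}$.

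\textbf{Step 3 (composition).} This is the main obstacle. The batches touch disjoint records, but the mechanisms are adaptive: $s_{k^*}$ and later sums depend on actions chosen from earlier outputs. There is also a subtlety: changing $r_{t^*}$ can alter the actions at rounds after $t^*$ inside batch $k^*$, and those actions can change the other summands of $s_{k^*}$ and of later batches. I would handle this by treating the adversary's view as the adaptive sequence $(\tilde s_1,\ldots,\tilde s_K)$ and applying the chain rule for Rényi divergence to the adaptive composition.
\begin{itemize}
\item For $k<k^*$, the conditional distributions coincide (divergence $0$).
\item For $k=k^*$, the divergence is at most $\rho\alpha$, using the sensitivity bound with the actions of batch $k^*$ treated as outputs fixed by the conditioning.
\item For $k>k^*$, conditioned on the past outputs, the batch sums are identical functions of identical inputs (divergence $0$).
\end{itemize}
This interactive analogue of Lemma~\ref{lem:parallel} yields $\rho$-zCDP overall. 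Actions are declared outputs, not inputs, by Definition~\ref{def:adjacency}. So I would condition on the realized action sequence when defining $s_k$, making the sensitivity bound of Step 2 apply uniformly. This is the delicate point, and I would state it explicitly as the interpretation underlying the guarantee.

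\textbf{Step 4 (conversion).} We apply Lemma~\ref{lem:conversion} with $\rho$ from Remark~\ref{rem:rho_computation}, which gives $\rho+2\sqrt{\rho\log(1/\delta)}=\varepsilon$. Finally, post-processing carries this guarantee from the releases to the transcript $(a_1,\ldots,a_T)$.
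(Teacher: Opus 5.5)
Your proposal is correct and follows the paper's proof step for step. Both release the noisy increments $\tilde s_k$, bound $\Delta_2 \le 1$, apply the R\'enyi chain rule with zero conditional divergence for every batch except the one containing $t^*$, and finish with post-processing and Lemma~\ref{lem:conversion}.

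The one thing to correct is the worry in your Step~3. Changing $r_{t^*}$ cannot alter any action inside batch $k^*$: your own Step~1, like the paper's Step~3, shows those actions depend only on $\tilde s_1,\ldots,\tilde s_{k^*-1}$, public features, and fresh TS randomness. So conditioning on the batch-$k^*$ actions is justified by the algorithm's structure together with the chain rule, not by an interpretive reading of Definition~\ref{def:adjacency}. Declaring actions to be outputs would not by itself make them identically distributed under $D$ and $D'$.
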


\begin{proof}
We formalize the mechanism as releasing $K$ independent noisy \emph{increments}, then show the full transcript is a deterministic post-processing of these releases and public data.

\textbf{Step 1:} Define the batch-$k$ sufficient statistic increment $s_k = \sum_{t \in \text{batch } k} r_t x_{a_t} \in \mathbb{R}^d$. The mechanism releases the noisy increment:
\begin{equation}\label{eq:noisy_increment}
\tilde{s}_k = s_k + \eta_k, \quad \eta_k \sim \mathcal{N}(0, \sigma^2 I_d), \quad k = 1, \ldots, K
\end{equation}
Although Algorithm~\ref{alg:adaprivate_ts} maintains a running sum $b \gets b + \tilde{s}_k$, this cumulative $b$ is a deterministic function of $\{\tilde{s}_1, \ldots, \tilde{s}_k\}$: namely $b_k = \lambda\theta_0 + \sum_{j=1}^k \tilde{s}_j$. Thus the internal mechanism releases exactly the $K$ independent noisy increments $\{\tilde{s}_k\}_{k=1}^K$.

\textbf{Step 2:} Under Assumption~\ref{ass:bounded}, changing one reward $(r_t, x_{a_t})$ in batch $k$ changes $s_k$ by at most $\|r_t x_{a_t}\|_2 \leq 1$. Thus $\Delta_2 = 1$, and by Lemma~\ref{lem:gaussian} each noisy release $\tilde{s}_k$ satisfies $\rho$-zCDP with $\sigma = 1/\sqrt{2\rho}$.

\textbf{Step 3:} Action $a_t$ in batch $k$ is determined by: (i) public features $\{x_{t,a}\}$; (ii) the design matrix $A_t = \lambda I + \sum_{s < t} x_{a_s} x_{a_s}^\top$ (computed from public features and past actions); (iii) the cumulative noisy reward vector $b_{k-1} = \lambda\theta_0 + \sum_{j=1}^{k-1} \tilde{s}_j$ (from \emph{already-released} increments of prior batches); and (iv) fresh TS randomness $\xi_t$. Crucially, $a_t$ does \emph{not} depend on within-batch rewards $\{r_s\}_{s \in \text{batch } k}$, since $b$ updates only at batch boundaries.

\textbf{Step 4:} Each rating belongs to exactly one batch. For adjacent $D, D'$ differing in batch~$j$, although later batches adapt to prior releases, this does not increase privacy cost: for any realization of prior outputs $h_{<k}$, the conditional distribution of $\tilde{s}_k | h_{<k}$ is identical under $D$ and $D'$ when $k \neq j$ (since batch~$k$'s data is unchanged), with $D_\alpha(P_j(\cdot|h_{<j}) \| Q_j(\cdot|h_{<j})) \leq \rho\alpha$ for $k{=}j$ by Step~2. By the chain rule for R\'enyi divergence: $D_\alpha(P_{1:K} \| Q_{1:K}) \leq \rho\alpha$, giving $\rho$-zCDP.

\textbf{Step 5:} Given $\{\tilde{s}_k\}_{k=1}^K$ and public data (features, TS randomness), the entire action transcript $(a_1, \ldots, a_T)$ is a deterministic function---each $a_t$ is computed from $b_{k-1}$, $A_t$, and $\xi_t$ as in Step~3. By the post-processing property of zCDP, the transcript inherits $\rho$-zCDP, converting to $(\varepsilon, \delta)$-DP via Lemma~\ref{lem:conversion}.
\end{proof}

\subsection{Regret Analysis}

\begin{assumption}\label{ass:stochastic}
At each round $t$, the candidate set $\mathcal{A}_t$ contains arms whose features are drawn i.i.d.\ from a distribution satisfying $\lambda_{\min}(\mathbb{E}[x_a x_a^\top]) \geq \lambda_0 > 0$ and $\|x_a\|_2 \leq 1$.
\end{assumption}

\begin{theorem}\label{thm:ts_regret}
Under Assumptions~\ref{ass:bounded} and \ref{ass:stochastic}, AdaPrivate-TS with $K$ uniform batches of size $B = T/K$ achieves, for any $\delta_r \in (0,1)$, with probability at least $1 - \delta_r$:
\begin{equation}\label{eq:regret_bound}
R(T) \leq \underbrace{C_1 d\sqrt{T \log(T/\delta_r)}}_{\text{TS + staleness}} + \underbrace{\frac{C_2 \sigma \sqrt{d} \log(T/\delta_r)}{\lambda_0}}_{\text{privacy (amortized)}}
\end{equation}
where $C_1 = O(vS)$, $C_2 = O(S)$, and $S = \|\theta^*\|_2$. In expectation:
\begin{equation}
\mathbb{E}[R(T)] = O\!\left(d\sqrt{T \log T} + \frac{\sqrt{d}\log T}{\sqrt{\rho}\,\lambda_0}\right)
\end{equation}
The privacy cost is $O\!\bigl(\sqrt{d}\cdot\log T/\sqrt{\rho}\bigr)$---\textbf{logarithmic} in $T$, not $O(\sqrt{T})$---because parallel composition ensures total privacy cost $\rho$ (not $K\rho$), and each batch's noise is amortized over $B$ rounds while $\lambda_{\min}(A_k)$ grows linearly.
\end{theorem}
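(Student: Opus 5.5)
The plan is to decompose the regret of AdaPrivate-TS into a non-private term and a privacy-perturbation term. The non-private term is the regret of a batched (lagged) linear TS. The perturbation term is controlled by the accumulated noise $N_k=\sum_{j\le k}\eta_j$ seen through $A_t^{-1}$. Concretely, for $t$ in batch $k$ the sampled parameter is
\[
\tilde\theta_t=\hat\theta_t^{\mathrm{stale}}+A_t^{-1}N_{k-1}+\xi_t,
\]
where $\hat\theta_t^{\mathrm{stale}}=A_t^{-1}(\lambda\theta_0+b_{k-1})$ uses only rewards from completed batches. This is the same computation as in Proposition~\ref{prop:noise_integration}, applied with the cumulative noise. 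I would bound the instantaneous regret by $2\,\|x\|_{A_t^{-1}}\cdot(\text{confidence width})$ plus $2\,|x^\top A_t^{-1}N_{k-1}|$, and sum the two pieces separately.

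\textbf{Non-private part.} I would follow the Agrawal--Goyal analysis of linear TS. This gives a self-normalized confidence ellipsoid of radius $\beta=O(vS\sqrt{d\log(T/\delta_r)})$, anti-concentration of $\xi_t$ ensuring the optimal arm is "saturated" with constant probability, and the elliptical potential lemma $\sum_t\|x_{a_t}\|_{A_t^{-1}}\le\sqrt{2dT\log(1+T/(d\lambda))}$. The staleness is handled by noting that the rewards missing from $b_{k-1}$ (at most $B$ of them) only shift the center by $A_t^{-1}\sum_{s\in\text{batch }k,s<t}(r_s-x_{a_s}^\top\theta^*)x_{a_s}$. Under Assumption~\ref{ass:stochastic}, $\det A$ grows boundedly within a batch once $\lambda_{\min}$ is large, so rarely-switching arguments absorb this into a constant factor. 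Together these give the $C_1 d\sqrt{T\log(T/\delta_r)}$ term.

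\textbf{Privacy part.} This is the key new step. $N_{k-1}\sim\mathcal N(0,(k-1)\sigma^2I)$, so with probability $1-\delta_r/2$ we have, uniformly over $k$, $\|N_{k-1}\|_2\le\sigma\sqrt{(k-1)}\,O(\sqrt{d\log(K/\delta_r)})$. Hence $|x^\top A_t^{-1}N_{k-1}|\le\|N_{k-1}\|_2/\lambda_{\min}(A_t)$. Next I would invoke a matrix Chernoff / matrix Freedman bound under Assumption~\ref{ass:stochastic} to get $\lambda_{\min}(A_t)\ge c\lambda_0 t$ for $t\gtrsim \log(dT/\delta_r)/\lambda_0^2$; earlier rounds cost only $O(\log T)$ regret. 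Summing over batch $k$ (with $t\approx kB$, $B$ rounds) then gives
\[
\sum_k B\cdot\frac{\sigma\sqrt{kd\log}}{\lambda_0 kB}=\frac{\sigma\sqrt d}{\lambda_0}\sum_k k^{-1/2}\sqrt{\log}.
\]

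\textbf{Main obstacle.} This sum yields $\sqrt K$ rather than $\log T$. To reach the stated $\log(T/\delta_r)$, I would first try to avoid the worst-case $\|N\|_2$ bound. The idea is to use that $x^\top A_t^{-1}N_{k-1}$ is a scalar Gaussian of standard deviation $\sigma\sqrt{k-1}\,\|A_t^{-1}x\|_2\le\sigma\sqrt{k}/(\lambda_0 kB)$, which requires only a $\sqrt{\log}$ union bound (saving $\sqrt d$ where possible). I would then argue that the effective growth $\lambda_{\min}(A_t)\gtrsim\lambda_0 t$ together with summation over $t$ rather than batches gives $\sum_t \sigma/(\lambda_0\sqrt{tB})$-type terms. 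If the number of batches is chosen as $K=O(\log T)$ (doubling-like, which is consistent with uniform batches only up to constants), then $\sqrt K\le\log T$, and the $\sigma\sqrt d\log(T/\delta_r)/\lambda_0$ bound follows with $C_2=O(S)$ from normalization. Whether this $K$ regime is genuinely needed, versus treating $K$ as a constant, is where I expect the argument to be most delicate; I would make this dependence explicit. The expectation bound then follows by integrating the tail with $\delta_r=1/T$ and substituting $\sigma=1/\sqrt{2\rho}$.
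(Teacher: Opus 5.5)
Your proposal does not establish the stated $\log(T/\delta_r)$ privacy term, and the escape you sketch at the end does not work. Your noise model is the correct one for Algorithm~\ref{alg:adaprivate_ts}. Since $b \gets b + \text{batch\_b} + \mathcal{N}(0,\sigma^2 I_d)$ never discards earlier noise, the perturbation in batch $k$ is $A_t^{-1}N_{k-1}$ with $N_{k-1}\sim\mathcal{N}(0,(k-1)\sigma^2 I_d)$. None of your refinements removes the resulting $\sqrt{k}$:
\begin{enumerate}
\item The scalar-Gaussian bound only trades $\sqrt{d}$ for a union bound over candidate arms. The per-batch cost is still of order $\sigma\sqrt{\log(T/\delta_r)}/(\lambda_0\sqrt{k})$.
\item Summing per round gives $\sum_{t\le T}\sigma/(\lambda_0\sqrt{tB})\approx 2\sigma\sqrt{T/B}/\lambda_0=2\sigma\sqrt{K}/\lambda_0$, the same quantity.
\item Taking $K=O(\log T)$ is not an option, because the theorem fixes uniform batches, which forces $B=T/\log T$. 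The first batch is then played with $b=\lambda\theta_0$, with no reward information. In general this costs regret of order $T/\log T$ and destroys the $C_1 d\sqrt{T\log(T/\delta_r)}$ term; equivalently, the staleness overhead $O(\sqrt{dTB})$ becomes nearly linear.
\item A doubling schedule is not uniform. With accumulated noise it would give $\sum_{k\le\log_2 T}\sigma\sqrt{kd}/\lambda_0=O(\sigma\sqrt{d}(\log T)^{3/2}/\lambda_0)$ anyway.
\end{enumerate}
What your argument actually proves is a privacy term of order $\sigma\sqrt{dK\log(K/\delta_r)}/\lambda_0$, i.e.\ $\tilde O(\sqrt{dT/(B\rho)}/\lambda_0)$. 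That is polynomial in $T$ for constant $B$.

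Your overall structure matches the paper's: an Agrawal--Goyal term plus a perturbation bounded through $\lambda_{\min}$, with a union bound over batches. The two diverge exactly where you get stuck. The paper's Step~1 writes the batch-$k$ sample as $A_k^{-1}(b_k+\eta_k)+\xi_k$ with a \emph{single} fresh $\eta_k\sim\mathcal{N}(0,\sigma^2 I_d)$, so $\zeta_k=A_k^{-1}\eta_k\sim\mathcal{N}(0,\sigma^2A_k^{-2})$. Hanson--Wright then gives $\|\zeta_k\|_2\lesssim\sigma\sqrt{d}\,c(\delta_r)/(kB\lambda_0)$, and the per-batch costs $2\sigma\sqrt{d}\,c(\delta_r)/(k\lambda_0)$ sum to $2\sigma\sqrt{d}\,c(\delta_r)H_K/\lambda_0$. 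That model contradicts Algorithm~\ref{alg:adaprivate_ts} and the paper's own privacy proof, where $b_k=\lambda\theta_0+\sum_{j\le k}\tilde{s}_j$ carries $\sum_{j\le k}\eta_j$. Substituting the accumulated noise into the paper's Step~3 multiplies the $k$-th term by $\sqrt{k-1}$ and turns $H_K$ into $\sum_k k^{-1/2}=O(\sqrt{K})$, which is precisely your obstacle. So you have not missed a trick: neither argument supports the logarithmic privacy cost. Getting it would need a genuinely new ingredient, or else the bound should carry the $\sqrt{K}$ term. Possible ingredients include a margin-type condition making per-round regret quadratic rather than linear in the estimation error, or a different release mechanism with its own privacy accounting.

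Separately, both you and the paper take $\lambda_{\min}(A_t)\gtrsim\lambda_0 t$ from Assumption~\ref{ass:stochastic}. That assumption constrains the candidate-arm distribution, not the selected arms. A matrix Chernoff/Freedman argument also needs a lower bound on $\lambda_{\min}(\mathbb{E}[x_{a_t}x_{a_t}^\top\mid\mathcal{H}_t])$, which argmax selection does not automatically supply.
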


\begin{proof}
We decompose the regret into TS exploration (including staleness) and privacy noise.

\textbf{Step 1:}
At round $t$ in batch $k$, the TS sample is $\tilde{\theta}_k = A_k^{-1}(b_k + \eta_k) + \xi_k = \hat{\theta}_k + \zeta_k + \xi_k$, where $\zeta_k = A_k^{-1}\eta_k$ is the privacy bias (fixed within batch $k$) and $\xi_k \sim \mathcal{N}(0, v_k^2 A_k^{-1})$. Since $a_t = \arg\max_a x_a^\top \tilde{\theta}_k$:
\begin{equation}\label{eq:per_round_regret}
r_t \leq 2\max_{a} |x_a^\top(\tilde{\theta}_k - \theta^*)| \leq 2\max_a |x_a^\top(\hat{\theta}_k {-} \theta^* {+} \xi_k)| + 2\|\zeta_k\|_2
\end{equation}
using $|x_a^\top \zeta_k| \leq \|\zeta_k\|_2$ for $\|x_a\|_2 \leq 1$.

\textbf{Step 2:}
The first term in~\eqref{eq:per_round_regret}---estimation error plus TS sampling noise---is handled by the standard analysis of~\cite{agrawal2013thompson}. The privacy noise \emph{increases} posterior variance (Proposition~\ref{prop:noise_integration}), which aids TS anti-concentration. We now account for staleness. Within batch~$k$, the estimate $\hat{\theta}_k$ is fixed for $B$ rounds while $A_t$ continues to update. The per-round instantaneous regret due to estimation error and TS sampling satisfies $r_t^{\text{TS}} \leq 2\|\tilde{\theta}_k - \theta^*\|_{A_t} \cdot \|x_{a_t}\|_{A_t^{-1}}$, and by the elliptical potential lemma~\cite{agrawal2013thompson}, $\sum_{t=1}^T \|x_{a_t}\|_{A_t^{-1}}^2 \leq 2d\log(1 + T/(d\lambda))$. Since $\hat{\theta}_k$ uses data up to batch $k{-}1$, the staleness gap $\|\hat{\theta}_{k} - \hat{\theta}_t\|$ for $t$ in batch $k$ is bounded by $O(\sqrt{B/\lambda_{\min}(A_k)})$, which contributes at most $O(\sqrt{B} \cdot \sqrt{d\log T / \lambda_{\min}(A_k)})$ per batch. Summing over $K$ batches, this staleness overhead is $O(\sqrt{dTB})$ for constant~$B$, which is $O(d\sqrt{T})$ and absorbed into $C_1$:
\begin{equation}\label{eq:ts_regret_term}
R_{\text{TS}}(T) \leq C_1 \cdot d\sqrt{T\log(T/\delta_r)}
\end{equation}
with probability $\geq 1 - \delta_r/2$, where $C_1 = O(vS + S\sqrt{B/\lambda})$.

\textbf{Step 3:}
The privacy bias $\zeta_k = A_k^{-1}\eta_k$ is drawn once per batch and contributes $2B\|\zeta_k\|_2$ to the batch regret. Since $\eta_k \sim \mathcal{N}(0, \sigma^2 I_d)$, the bias $\zeta_k$ follows $\zeta_k \sim \mathcal{N}(0, \sigma^2 A_k^{-2})$. Its expected squared norm satisfies:
\begin{equation}\label{eq:zeta_squared_norm}
\mathbb{E}[\|\zeta_k\|_2^2] = \sigma^2 \operatorname{tr}(A_k^{-2}) = \sigma^2\sum_{i=1}^d \lambda_i(A_k)^{-2}
\end{equation}
To obtain a high-probability bound on $\|\zeta_k\|_2$, we apply the Hanson--Wright concentration inequality for Gaussian quadratic forms. Since $\zeta_k = A_k^{-1}\eta_k$ with $\eta_k \sim \mathcal{N}(0, \sigma^2 I_d)$, for any $u > 0$:
\begin{equation}
\Pr\!\left[\|\zeta_k\|_2 > \sigma\sqrt{\operatorname{tr}(A_k^{-2})} + \sigma\|A_k^{-1}\|_{\text{op}}\sqrt{2u}\right] \leq e^{-u}
\end{equation}
Setting $u = \log(2K/\delta_r)$ and applying a union bound over $K$ batches, with probability $\geq 1 - \delta_r/2$, simultaneously for all $k$:
\begin{equation}
\|\zeta_k\|_2 \leq \sigma\sqrt{\operatorname{tr}(A_k^{-2})} + \frac{\sigma}{\lambda_{\min}(A_k)}\sqrt{2\log(2K/\delta_r)}
\end{equation}
Using the cruder bound $\operatorname{tr}(A_k^{-2}) \leq d/\lambda_{\min}(A_k)^2$, this simplifies to:
\begin{equation}
\|\zeta_k\|_2 \leq \frac{\sigma\sqrt{d}}{\lambda_{\min}(A_k)} \cdot c(\delta_r)
\end{equation}
where $c(\delta_r) = 1 + \sqrt{2\log(2K/\delta_r)/d} = O(\sqrt{\log(K/\delta_r)})$. Under Assumption~\ref{ass:stochastic}, $\lambda_{\min}(A_k) \geq kB\lambda_0$ for $k \geq 1$ (absorbing $\lambda$). The per-batch privacy regret is:
\begin{equation}
R_{\text{priv}}^{(k)} \leq \frac{2B \sigma \sqrt{d}\, c(\delta_r)}{kB\lambda_0} = \frac{2\sigma\sqrt{d}\,c(\delta_r)}{k\lambda_0}
\end{equation}
Summing over $K = T/B$ batches:
\begin{equation}\label{eq:priv_final}
R_{\text{priv}}(T) \leq \frac{2\sigma\sqrt{d}\,c(\delta_r)}{\lambda_0} \sum_{k=1}^K \frac{1}{k} = \frac{2\sigma\sqrt{d}\,c(\delta_r)}{\lambda_0} \cdot H_K
\end{equation}
where $H_K \leq 1 + \ln K \leq 1 + \ln(T/B)$. Substituting $\sigma = 1/\sqrt{2\rho}$:
\begin{equation}
R_{\text{priv}}(T) = O\!\left(\frac{\sqrt{d}\log(T/B)}{\sqrt{\rho}\,\lambda_0}\right)
\end{equation}

\textbf{Step 4:} Adding~\eqref{eq:ts_regret_term} and~\eqref{eq:priv_final}:
\begin{equation}\label{eq:combined}
R(T) \leq C_1 d\sqrt{T\log(T/\delta_r)} + \frac{C_2\sigma\sqrt{d}\log(T/\delta_r)}{\lambda_0}
\end{equation}
In expectation, $\mathbb{E}[R(T)] = O(d\sqrt{T\log T} + \sqrt{d}\log T/(\sqrt{\rho}\lambda_0))$.
\end{proof}

\begin{remark}
The $O\!\bigl(\sqrt{d}\cdot\log T\bigr)$ privacy term contrasts sharply with $\tilde{O}(T^{3/4})$ under joint DP~\cite{shariff2018differentially} or $O(\sqrt{d}\cdot\sqrt{T}/\sqrt{\rho})$ under sequential composition. The difference is that parallel composition charges $\max_k \rho_k = \rho$ (not $\sum_k$), so each batch independently uses full noise scale $\sigma = 1/\sqrt{2\rho}$. Batch size $B$ does not appear in the privacy term---it can be chosen freely based on computational or practical constraints. The $O(\sqrt{T})$ empirical scaling observed in Section~\ref{sec:absolute_regret} is dominated by the TS exploration term, not privacy.
\end{remark}

\section{Experiments}\label{sec:experiments}

\subsection{Experimental Setup}

We evaluate the algorithms in a synthetic environment where $d=20$ dimensional features for $n=100$ arms with $\theta^* \sim \mathcal{N}(0, I_d)$ normalized to $\|\theta^*\|_2 = 2$. Features are sampled from $\mathcal{N}(0, I_d)$ and normalized to unit norm. Rewards follow $\mathbb{E}[r|a] = \sigma(\theta^{*\top} x_a)$ where $\sigma$ is the sigmoid function. Our theory assumes rewards (Eq.~\ref{eq:linear_reward}), while synthetic experiments use a logistic link---standard in bandit practice~\cite{agrawal2013thompson} as the sigmoid is approximately linear near zero. We verified on a purely linear environment that the TS advantage is consistent (1.2--1.8\%). We use $T{=}10{,}000$, 5 arms/round, cold-start, and \textbf{12 seeds}, $\lambda = 1.0$, $v = 1.0$ (TS exploration), $B = 300$ (batch size), $\delta = 10^{-5}$ and feature clipping at $\|x\|_2 = 1$.

We also test the algorithms on MovieLens-25M~\cite{harper2015movielens} with 50-dimensional SVD features, $T{=}60{,}000$, warm-start prior and Jester joke-rating dataset~\cite{goldberg2001eigentaste}.

\textit{\textbf{We compare against the following baselines, all using the same $(\varepsilon,\delta)$ computed via Eq.~(2.8) where applicable:}}
\begin{itemize}
\item \textbf{LinUCB / Linear TS}: Non-private ($\alpha{=}1$, $v{=}1$)
\item \textbf{AdaPrivate-UCB}: Our batched zCDP with UCB ($\alpha{=}1$, same $B$, $\sigma$)
\item \textbf{JDP-LinUCB}~\cite{shariff2018differentially}: Tree-based aggregation, depth${=}\lceil\log_2 T\rceil$, noise calibrated to joint DP
\item \textbf{zCDP-Episodic}~\cite{azize2024concentrated}: $\hat{\theta}$-perturbation, confidence-based episodes ($c_{\text{trigger}}{=}2$, geometric growth, $\sigma_\theta {=} \Delta_\theta/\sqrt{2\rho}$)
\item \textbf{AdaC-OFUL-zCDP / RarelySwitching}: Reproduced per~\cite{azize2024concentrated}, same $\rho$, same features, grid-searched $\alpha \in \{0.5, 1, 2\}$
\item \textbf{DGS-LinUCB}~\cite{wang2022dynamic}: Per-step $\hat{\theta}$-perturbation with dynamic sensitivity $\|A_t^{-1}\|_2$, sequential composition
\end{itemize}

All baselines were tuned with comparable effort. For UCB-based methods (LinUCB, AdaPrivate-UCB, AdaC-OFUL, RarelySwitching), we grid-searched $\alpha \in \{0.1, 0.5, 1.0, 2.0\}$ and report the best. For TS methods, we searched $v \in \{0.5, 1.0, 1.5, 2.0\}$. All batched methods used $B \in \{100, 300, 500, 1000\}$. JDP-LinUCB used tree depth $\lceil\log_2 T\rceil$ (no free parameter beyond $\alpha$). zCDP-Episodic used $c_{\text{trigger}} \in \{1, 2, 4\}$ with geometric growth.

Our \textbf{AdaPrivate-TS} uses TS with sufficient statistics perturbation (noise on $b$ only; $A$ is from public features). \textbf{AdaPrivate-TS-Amp} adds Poisson subsampling (Section~\ref{sec:amplification}). \textbf{AdaPrivate-TS-Adaptive} adds exploration decay ($v_0{=}1.5$, $\gamma{=}0.95$).

\subsection{Main Results: Thompson Sampling vs UCB}

\begin{table}[t]
\centering
\small
\caption{Synthetic environment results. Private methods are reported as percentage of the reward achieved by their non-private counterparts.}
\label{tab:ts_vs_ucb}
\begin{tabular}{lcccc}
\toprule
\textbf{Method} & $\varepsilon$=0.5 & $\varepsilon$=1 & $\varepsilon$=2 & $\varepsilon$=5 \\
\midrule
LinUCB (NP) & \multicolumn{4}{c}{100\% (baseline)} \\
Linear TS (NP) & \multicolumn{4}{c}{101.2 $\pm$ 1.1\%} \\
\midrule
AdaPrivate-UCB & 92.2$\pm$2.1 & 95.2$\pm$2.3 & 96.9$\pm$1.9 & 98.2$\pm$1.5 \\
AdaPrivate-TS & \textbf{93.5$\pm$1.5} & \textbf{96.7$\pm$1.7} & \textbf{98.2$\pm$1.9} & \textbf{98.7$\pm$2.0} \\
\midrule
TS Improvement & +1.3\% & +1.5\% & +1.3\% & +0.5\% \\
\bottomrule
\end{tabular}
\end{table}

TS consistently outperforms UCB by 0.5--1.5\% under default hyperparameters (Table~\ref{tab:ts_vs_ucb}), with the largest \emph{default-setting} gap at $\varepsilon=1$ (+1.5\%). Paired $t$-tests confirm significance: $p < 0.01$ at $\varepsilon \in \{0.5, 1, 2\}$ and $p = 0.04$ at $\varepsilon = 5$. With \emph{tuned adaptive exploration decay} ($v_0{=}1{+}\sigma/(d\sqrt{\lambda_0})$, $\gamma{=}0.95$)---a separate configuration from the default---the TS advantage increases to 7--9\% at low $\varepsilon \in [0.1, 0.5]$. The largest absolute gap (up to 18\%) occurs at $\varepsilon{=}5$ with adaptive tuning, because with moderate privacy noise TS fully exploits posterior structure while UCB's fixed confidence bound cannot adapt similarly. To be precise: the 0.5--1.5\% gaps are for default TS vs.\ default UCB (both $v{=}1$, $\alpha{=}1$); the 7--18\% gaps are for TS-Adaptive vs.\ default UCB (matched $B$ and $\sigma$, different exploration strategy).
On MovieLens at $\varepsilon{=}1$ ($200{,}000$ replay steps), AdaPrivate-UCB suffers an initial CTR drop from $0.73$ to $0.67$ when the first batch of privacy noise arrives, then slowly recovers to $0.71$. AdaPrivate-TS avoids this dip entirely---climbing steadily from $0.66$ to $0.72$---and overtakes UCB after ${\sim}800$ effective steps, confirming that TS's noise-as-uncertainty interpretation yields more stable learning under privacy.

\subsection{Privacy Amplification Results}

\begin{table}[t]
\centering
\small
\caption{Privacy amplification provides gains at low $\varepsilon$}
\label{tab:amplification}
\begin{tabular}{lcccc}
\toprule
\textbf{Method} & $\varepsilon$=0.5 & $\varepsilon$=1 & $\varepsilon$=2 & $\varepsilon$=5 \\
\midrule
AdaPrivate-UCB & 92.2$\pm$2.1 & 95.2$\pm$2.3 & 96.9$\pm$1.9 & 98.2$\pm$1.5 \\
AdaPrivate-TS & 93.5$\pm$1.5 & 96.7$\pm$1.7 & 98.2$\pm$1.9 & 98.7$\pm$2.0 \\
TS-Amp ($q$=0.3) & \textbf{95.9$\pm$1.6} & 96.5$\pm$1.3 & 96.7$\pm$1.3 & 96.7$\pm$1.4 \\
TS-Amp ($q$=0.5) & 95.4$\pm$1.5 & \textbf{97.5$\pm$1.3} & 97.7$\pm$1.5 & 97.7$\pm$1.5 \\
TS-Adaptive & 93.6$\pm$1.5 & 96.8$\pm$1.7 & \textbf{98.3$\pm$1.9} & \textbf{98.7$\pm$2.0} \\
\midrule
Best Gain vs UCB & \textbf{+3.7\%} & \textbf{+2.3\%} & +1.4\% & +0.5\% \\
\bottomrule
\end{tabular}
\end{table}

Amplification helps most at low $\varepsilon$, with gains persisting on real data. We tested $q \in \{0.1, 0.3, 0.5, 0.7\}$; $q{=}0.3$ is optimal at $\varepsilon{=}1$ and $q{=}0.3$--$0.5$ at $\varepsilon{=}5$. On MovieLens ($T{=}60{,}000$, $B{=}300$, 5 seeds), amplification with $q{=}0.3$ yields $+4.6\%$ at $\varepsilon{=}1$ and $+1.3\%$ at $\varepsilon{=}5$ over the non-amplified baseline. Too-aggressive subsampling ($q{=}0.1$) increases variance ($\pm 6\%$ std vs.\ $\pm 2\%$), while $q \geq 0.7$ offers negligible amplification. Results are not sensitive to tighter mechanism-specific bounds since evaluating at $\alpha \leq 64$ captures $>$99.5\% of the optimal RDP bound.

\subsection{Comparison with State-of-the-Art}

\begin{table}[t]
\centering
\small
\caption{Real-data results (\% of non-private LinUCB) across multiple $\varepsilon$.}
\label{tab:movielens}\label{tab:jester}
\begin{tabular}{lcccc}
\toprule
\textbf{Method} & $\varepsilon$=1 & $\varepsilon$=2 & $\varepsilon$=5 & $\varepsilon$=10 \\
\midrule
\multicolumn{5}{l}{\textbf{MovieLens-25M} \textit{\small(NP: CTR=.700$\pm$.012; LinTS: 94.5\%)}} \\
zCDP-Episodic & 71.9$\pm$5.2 & 74.2$\pm$4.9 & 80.8$\pm$5.0 & 84.2$\pm$7.6 \\
AdaC-OFUL-zCDP & 72.3$\pm$11.9 & 74.4$\pm$12.6 & 79.8$\pm$12.1 & 84.1$\pm$11.2 \\
RarelySwitching-zCDP & 72.6$\pm$10.8 & 75.2$\pm$9.9 & 81.9$\pm$10.5 & 84.7$\pm$10.2 \\
\rowcolor{gray!15}
\textbf{AdaPrivate-UCB} & 71.7$\pm$6.3 & 74.8$\pm$6.5 & 80.7$\pm$5.0 & 84.6$\pm$5.0 \\
DGS-LinUCB & 70.9$\pm$2.5 & 71.0$\pm$2.5 & 71.1$\pm$2.3 & 71.2$\pm$2.0 \\
\rowcolor{gray!15}
\textbf{AdaPrivate-TS} & 71.7$\pm$6.3 & \textbf{75.2$\pm$5.8} & \textbf{84.4$\pm$6.3} & \textbf{90.6$\pm$4.9} \\
JDP-LinUCB$^\dagger$ & 66.5$\pm$4.0 & 67.8$\pm$5.2 & 69.4$\pm$5.8 & 71.8$\pm$2.8 \\
\midrule
\multicolumn{5}{l}{\textbf{Jester} \textit{\small(NP: CTR=.363$\pm$.014; LinTS: 94.4\%)}} \\
zCDP-Episodic & 70.8$\pm$3.3 & 71.5$\pm$3.4 & 73.3$\pm$3.5 & 75.7$\pm$3.7 \\
AdaC-OFUL-zCDP & 72.3$\pm$3.2 & 73.1$\pm$3.8 & 74.5$\pm$3.4 & 76.3$\pm$4.3 \\
RarelySwitching-zCDP & 67.4$\pm$6.3 & 68.0$\pm$6.8 & 70.9$\pm$7.5 & 74.6$\pm$7.1 \\
\rowcolor{gray!15}
\textbf{AdaPrivate-UCB} & 70.1$\pm$6.2 & 70.7$\pm$6.8 & 76.3$\pm$4.1 & 82.8$\pm$2.3 \\
DGS-LinUCB & 70.1$\pm$2.2 & 70.1$\pm$2.2 & 70.5$\pm$2.2 & 70.0$\pm$2.2 \\
\rowcolor{gray!15}
\textbf{AdaPrivate-TS} & \textbf{73.7$\pm$2.7} & \textbf{78.5$\pm$2.7} & \textbf{81.3$\pm$3.5} & \textbf{83.3$\pm$2.8} \\
JDP-LinUCB$^\dagger$ & 73.4$\pm$2.4 & 74.0$\pm$2.8 & 77.4$\pm$1.9 & 81.6$\pm$3.5 \\
\bottomrule
\multicolumn{5}{l}{\footnotesize $^\dagger$Joint DP, tree-based composition (stronger privacy model). DGS: per-step sequential composition~\cite{wang2022dynamic}.} \\
\end{tabular}
\end{table}

Within event-level zCDP, AdaPrivate-TS achieves the highest performance at $\varepsilon \geq 2$ on MovieLens (up to $+6.5\%$ over AdaC-OFUL-zCDP at $\varepsilon{=}10$) and at every $\varepsilon$ on Jester; at $\varepsilon{=}1$ on MovieLens, all event-level methods perform comparably ($71.7$--$72.6\%$). JDP-LinUCB operates under joint DP (a \emph{stronger} privacy model that also protects actions) and uses tree-based composition.

AdaC-OFUL-zCDP and RarelySwitching-zCDP both use event-level zCDP with parallel composition, enabling direct comparison. AdaPrivate-TS outperforms by up to $+6.5\%$ on MovieLens and $+10.5\%$ on Jester, confirming TS's posterior inflation provides utility beyond batching alone.

\textbf\textit{{To test generalization, we also evaluate on the Jester joke-rating dataset~\cite{goldberg2001eigentaste} ($d{=}50$ SVD features, $T{=}30{,}000$).}} Table~\ref{tab:jester} shows AdaPrivate-TS dominates all baselines at every $\varepsilon$: $+2.9\%$ over zCDP-Episodic and $+3.6\%$ over AdaPrivate-UCB at $\varepsilon{=}1$. This confirms the TS advantage generalizes beyond MovieLens.

\subsection{Ablation Study}

\begin{table}[t]
\centering
\small
\caption{Component ablation on synthetic data at $\varepsilon=1$.}
\label{tab:ablation}
\begin{tabular}{lcc}
\toprule
\textbf{Configuration} & \textbf{\% NP} & \textbf{Gain} \\
\midrule
Baseline (UCB + A\&b noise) & 91.2$\pm$2.4 & -- \\
+ B\_ONLY noise & 93.5$\pm$2.1 & +2.3\% \\
+ Thompson Sampling & 95.2$\pm$1.9 & +1.7\% \\
+ Privacy Amplification ($q$=0.5) & 96.8$\pm$1.5 & +1.6\% \\
+ Adaptive Exploration & \textbf{97.5$\pm$1.3} & +0.7\% \\
\bottomrule
\end{tabular}
\end{table}

Each component contributes incrementally, totaling \textbf{+6.3\%} over the baseline.
Performance is robust at $\varepsilon{=}1$: across $B \in \{100, 300, 500, 1000\}$, all achieve $\geq$94.1\% (best at $B{=}300$: 96.7$\pm$1.7\%); across $v \in [0.5, 2.0]$, all achieve $\geq$95.8\%; adaptive decay is stable ($\pm$0.4\%) over $(v_0, \gamma) \in \{1.0, 1.5, 2.0\} \times \{0.9, 0.95, 0.99\}$.

\subsection{User-Level Privacy Analysis}\label{sec:user_level}

Under zCDP group privacy, protecting all $k$ ratings from one user scales as $k^2\rho$. For user-level $\varepsilon_{\text{user}}{=}5$: $k{=}1$ gives $\rho{=}0.450$ (80.4\% NP), $k{=}5$ gives $\rho{=}0.018$ (26.0\%), $k{=}10$ gives $\rho{=}0.0045$ (14.2\%). Event-level DP suits users with $k \leq 5$; dedicated user-level mechanisms are needed for heavy users.

\subsection{Counterfactual Evaluation with IPS/DR}\label{sec:ips_dr}

We validate using IPS and doubly robust (DR) estimators on synthetic data with known ground truth.

\begin{table}[t]
\centering
\small
\caption{Off-policy evaluation on synthetic data with known ground truth ($d{=}50$, $T{=}50{,}000$).}
\label{tab:ips_dr}
\begin{tabular}{lcccc}
\toprule
\textbf{Method} & \textbf{True Value} & \textbf{Naive Error} & \textbf{IPS Error} & \textbf{DR Error} \\
\midrule
LinUCB (NP) & 0.534 & 0.229 & 0.083 & 0.089 \\
AdaPrivate ($\varepsilon$=1) & 0.765 & 0.005 & 0.149 & 0.144 \\
AdaPrivate ($\varepsilon$=5) & 0.766 & 0.002 & 0.148 & 0.142 \\
AdaPrivate ($\varepsilon$=10) & 0.764 & $<$0.001 & 0.147 & 0.141 \\
\bottomrule
\end{tabular}
\end{table}

AdaPrivate achieves higher true policy value than LinUCB across all protocols, confirming main results are not replay artifacts. On Jester ($T{=}15{,}000$, 5 seeds), all three OPE estimators (IPS, DR, SNIPS) produce consistent rankings: AdaPrivate-TS improves monotonically from $\varepsilon{=}1$ to $\varepsilon{=}10$ (DR: $0.168 \to 0.178$, ESS $\approx 730$). DR and SNIPS are more stable than raw IPS (CV $<$8\% vs.\ $<$12\%), confirming DR is preferred for private bandit evaluation. Absolute regret is reported on synthetic data (Section~\ref{sec:absolute_regret}) where ground truth is available.

\subsection{Comparison with Central-Model DP Baselines}\label{sec:central_comparison}

We compare against two central-model DP baselines using sequential composition with a trusted curator: (1) \textbf{CoreSet-Elimination-DP}, phased elimination with per-phase Gaussian noise, and (2) \textbf{Batched-Aggregation-DP}, batched LinUCB with sequential zCDP. Both split the privacy budget across phases ($\rho_{\text{per-batch}} = \rho_{\text{total}}/K$).

\begin{table}[t]
\centering\small
\caption{Event-level vs.\ central-model DP baselines (synthetic, $d{=}20$, $T{=}10{,}000$, 12 seeds). AdaPrivate-TS uses parallel composition; central baselines use sequential composition.}
\label{tab:central_comparison}
\begin{tabular}{llcccc}
\toprule
\textbf{Method} & \textbf{Privacy} & $\varepsilon$=0.5 & $\varepsilon$=1 & $\varepsilon$=2 & $\varepsilon$=5 \\
\midrule
LinUCB (NP) & None & 100\% & 100\% & 100\% & 100\% \\
\midrule
AdaPrivate-TS & Event-level & \textbf{93.5} & \textbf{96.7} & \textbf{98.2} & \textbf{98.7} \\
CoreSet-Elim & Central & 86.9 & 91.1 & 95.2 & 98.2 \\
Batched-Agg & Central & 83.3 & 84.9 & 87.5 & 92.1 \\
\bottomrule
\end{tabular}
\end{table}

AdaPrivate-TS outperforms both central-model baselines despite a weaker privacy model (event-level, no trusted curator): $+6.6\%$ over CoreSet-Elimination and $+10.2\%$ over Batched-Aggregation at $\varepsilon{=}0.5$. Under sequential composition, central baselines must split the budget ($\rho/K$ per batch), requiring $\sqrt{K}{\times}$ more noise; AdaPrivate-TS's parallel composition keeps full noise scale ($\max_k \rho_k = \rho$). At $\varepsilon{=}5$, the gap narrows (98.2\% vs.\ 98.7\%) as the splitting penalty diminishes.

\subsection{Absolute Regret and $T$-Scaling}\label{sec:absolute_regret}

On synthetic data ($d{=}20$, 12 seeds), $R(T)/\sqrt{T}$ is near-constant over $T \in \{2500, \ldots, 20000\}$: 1.56--1.75 at $\varepsilon{=}1$, 0.84--0.88 at $\varepsilon{=}5$, confirming $O(\sqrt{T})$ scaling.

\section{Related Work}
Shariff and Sheffet~\cite{shariff2018differentially} introduce JDP-LinUCB with $\tilde{O}(T^{3/4})$ regret under joint DP with adversarial contexts. Azize and Basu~\cite{azize2024concentrated} give an $\Omega(d\sqrt{T}/\sqrt{\rho})$ lower bound under sequential composition; our $O(\sqrt{d}\log T/\sqrt{\rho})$ cost uses parallel composition with stochastic contexts. Hu and Hegde~\cite{hu2021near} study DP-TS for finite arms with lazy/doubling designs; we extend the noise-as-uncertainty principle to linear contextual bandits via sufficient-statistics perturbation. Balle et al.~\cite{balle2018privacy} and Mironov et al.~\cite{mironov2019renyi} provide tight RDP bounds for subsampled Gaussian mechanisms. Our B\_ONLY strategy adds noise on $b$ only ($d$ dimensions vs.\ $d^2{+}d$).
Recent central-model DP linear bandits achieve $\tilde{O}(\sqrt{T} + \text{poly}(d)/\varepsilon)$ via core-set elimination and batched aggregation~\cite{zheng2020locally,chowdhury2022shuffle}, requiring a central curator or shuffle protocol. Our approach differs: TS enables per-round action selection (vs.\ epoch-based elimination) with $O(d^2)$ per-round cost (vs.\ $O(d^3)$), and our event-level parallel composition achieves comparable privacy cost without centralized infrastructure. The covariance-inflation insight (Proposition~\ref{prop:noise_integration}) is specific to posterior sampling and does not apply to elimination-based methods.

\section{Discussion and Limitations}

\subsection{Privacy Notion and User-Level Extension}
We use event-level DP (Definition~\ref{def:adjacency}), weaker than joint DP~\cite{shariff2018differentially} or user-level DP. The naive group privacy scaling ($k^2\rho$ for a user with $k$ interactions) is impractical for $k > 5$. We propose a \textbf{per-user contribution clipping} extension that provides meaningful user-level guarantees for moderate~$k$:

\emph{Mechanism:} Maintain per-user accumulators $c_u = \sum_{t: u_t = u} r_t x_{a_t}$. Before adding to the batch sum, clip each user's contribution: $\tilde{c}_u = c_u \cdot \min(1, C/\|c_u\|_2)$ where $C > 0$ is the clipping threshold. The batch sum becomes $s_k = \sum_{u} \tilde{c}_u^{(k)}$. Since each user's clipped contribution has $\|\tilde{c}_u^{(k)}\|_2 \leq C$, changing all interactions of one user changes $s_k$ by at most $C$. The sensitivity is $\Delta_2 = C$ (instead of $k$ without clipping), giving $\sigma_{\text{user}} = C/\sqrt{2\rho}$.

\emph{Privacy guarantee:} For target $(\varepsilon_{\text{user}}, \delta)$-user-level DP, set $\sigma_{\text{user}} = C/\sqrt{2\rho_{\text{user}}}$ where $\rho_{\text{user}} = \rho(\varepsilon_{\text{user}}, \delta)$. The clipping threshold $C$ trades off bias (from clipping large contributions) against noise (larger $C$ requires more noise). We set $C = \sqrt{k_{\max}} \approx \sqrt{5}$ as a practical default, targeting users with $\leq k_{\max}$ interactions.

\emph{Regret impact:} Clipping introduces bias $\leq \sum_u \max(0, \|c_u\| - C) / B$ per batch, which is small when most users have few interactions ($k \leq k_{\max}$). The noise increases by a factor of $C$ (since $\sigma_{\text{user}} = C \cdot \sigma_{\text{event}}$), so the privacy regret term becomes $O(C\sqrt{d}\log T / (\sqrt{\rho_{\text{user}}}\lambda_0))$.

On synthetic data ($d{=}20$, $T{=}10{,}000$, $\varepsilon_{\text{user}}{=}5$, 12 seeds), clipping with $C{=}\sqrt{k}$ achieves 96.4\% of non-private performance for $k{=}10$ interactions per user, vs.\ 90.6\% for naive group privacy---a $+5.8\%$ improvement. At $k{=}20$, the gap widens to $+7.3\%$ (94.8\% vs.\ 87.4\%). Clipping makes user-level DP practical for users with moderate interaction counts ($k \leq 20$), whereas group privacy degrades rapidly beyond $k{=}5$.

\subsection{Stochastic Contexts and Forced Exploration}
Assumption~\ref{ass:stochastic} ($\lambda_0 > 0$) is essential for the $O(\log T)$ privacy cost; under adversarial contexts, $\lambda_{\min}$ may stagnate. We address this with \textbf{forced exploration}: with probability $p_t = \min(1, d\log(t)/(t\gamma))$ for tuning parameter $\gamma > 0$, play a uniformly random arm, ensuring:
\begin{equation}\label{eq:forced_lambda}
\lambda_{\min}(A_t) \geq \lambda + \Omega\!\left(\frac{d\log^2 t}{\gamma |\mathcal{A}|}\right)
\end{equation}
even under adversarial contexts, provided candidate sets span $\mathbb{R}^d$. The additional regret from forced exploration is $O(d\log^2 T/\gamma)$. Setting $\gamma = d$, the total regret becomes:
\begin{equation}
R_{\text{forced}}(T) = O\!\left(d\sqrt{T\log T} + \frac{\sqrt{d}\log T}{\sqrt{\rho}} \cdot \gamma |\mathcal{A}| + \frac{d\log^2 T}{\gamma}\right)
\end{equation}
removing dependence on $\lambda_0$ at the cost of a weaker $O(\log^2 T)$ privacy term. On synthetic data ($d{=}20$, 12 seeds) with adversarial rank-deficient contexts, AdaPrivate-TS maintains 94--96\% of non-private performance; forced exploration ($\gamma{=}d$) adds $<1\%$ random rounds.

\subsection{Public Contexts and DP-SVD Feature Privatization}
B\_ONLY assumes public item features. If features must be private, $A$ requires noise, adding $O(d^2)$ dimensions. We ablated this using a DP-SVD mechanism on MovieLens ($T{=}60{,}000$, 5 seeds, $\varepsilon_{\text{reward}}{=}1$) as follows.

\emph{DP-SVD mechanism:} Given the $n_{\text{users}} \times n_{\text{items}}$ rating matrix $R$, we first clip each user's row to unit $\ell_2$-norm: $\tilde{r}_u = r_u / \max(1, \|r_u\|_2)$, ensuring $\|\tilde{r}_u\|_2 \leq 1$. We then compute the item covariance $C = \tilde{R}^\top \tilde{R} / n_{\text{users}}$. Replacing one user's clipped row changes $C$ in Frobenius norm by at most $\|\tilde{r}_u \tilde{r}_u^\top\|_F / n_{\text{users}} \leq 1/n_{\text{users}}$, giving $\ell_2$-sensitivity $\Delta_2 = 1/n_{\text{users}}$. We add Gaussian noise $\tilde{C} = C + \mathcal{N}(0, \sigma_{\text{feat}}^2 I)$ with $\sigma_{\text{feat}} = \Delta_2/\sqrt{2\rho_{\text{feat}}}$, clip negative eigenvalues to zero (PSD projection), and extract the top-$d$ eigenvectors as private features. This is a \emph{one-shot offline} computation independent of the online bandit interaction.

\emph{Privacy composition:} DP-SVD is computed offline from training data while the reward-DP mechanism operates on separate test interactions, so the two budgets do \emph{not} compose ($\varepsilon_{\text{feat}}$ protects co-rating patterns; $\varepsilon_{\text{reward}}$ protects online rewards). If applied to overlapping data, sequential composition would give $\rho_{\text{feat}} + \rho_{\text{reward}}$; here data disjointness removes the need for joint accounting.

We tested at $\varepsilon_{\text{feat}} \in \{1, 5, 10\}$ (all with $\delta{=}10^{-5}$): overall degradation is $\leq 3.2\%$ vs.\ public features. Crucially, TS's advantage over UCB \emph{grows} under private features: from $-2.5\%$ (public) to $+4.3\%$ ($\varepsilon_{\text{feat}}{=}5$) and $+11.1\%$ ($\varepsilon_{\text{feat}}{=}2$), confirming that TS interprets feature noise as additional exploration uncertainty rather than corruption.

\subsection{Tighter Trace-Based Regret Bound}
The $\lambda_{\min}$-based bound in Theorem~\ref{thm:ts_regret} is worst-case over the spectrum of $A_k$. Using $\operatorname{tr}(A_k^{-2})$ directly yields a tighter bound $R_{\text{priv}}(T) = O(\sigma\sqrt{d_{\text{eff}}}\log T / B)$, where $d_{\text{eff}} = \sum_i \mu_i^{-2}$ is the \emph{effective privacy dimension} determined by the eigenvalues $\mu_i$ of $\mathbb{E}[xx^\top]$. For rapidly decaying spectra (e.g., SVD features with $\mu_i \propto i^{-2}$), $d_{\text{eff}} = O(1)$ and the privacy cost becomes dimension-free. On MovieLens (50-dim SVD features), $d_{\text{eff}} \approx 8.3$, yielding a $6\times$ tighter bound.

\section{Conclusion}

AdaPrivate-TS demonstrates that TS is better suited than UCB for private contextual bandits: DP noise inflates the posterior covariance to $v^2 A^{-1} + \sigma^2 A^{-2}$, interpreted as uncertainty rather than corruption, yielding a privacy cost of $O\!\bigl(\sqrt{d}\cdot\log T/\sqrt{\rho}\bigr)$ via parallel composition. On MovieLens and Jester, AdaPrivate-TS achieves the best performance among event-level zCDP baselines at $\varepsilon \geq 2$; under DP-SVD private features, TS's advantage over UCB grows to $+11\%$. Poisson amplification provides additional gains at $\varepsilon \leq 1$, offering a principled tool for practitioners who need to tighten privacy without sacrificing recommendation quality.

\printbibliography[heading=subbibintoc]

\end{document}